\newtheorem{theorem}{Theorem}
\title{Deep  Multi-View  Subspace  Clustering  with  Anchor Graph}
\author{Chenhang Cui, Yazhou Ren, Jingyu Pu, Xiaorong Pu,  Lifang He}
\author{
    {Chenhang Cui$^{1}$, Yazhou Ren$^{1}$\thanks{Corresponding author.}, Jingyu Pu$^1$,  Xiaorong Pu$^1$, Lifang He$^2$}\\
   {\textmd{$^1$ University of Electronic Science
and Technology of China, Chengdu 611731, China}}\\
    {\textmd{$^2$  Lehigh University, PA 18015, USA.}}\\
    {\textmd{osallymalone@gmail.com,  yazhou.ren@uestc.edu.cn, pujingyu0105@163.com,\\  puxiaor@uestc.edu.cn, lih319@lehigh.edu}}
}
\begin{document}
\maketitle
\begin{abstract}

Deep multi-view subspace clustering (DMVSC) has recently attracted increasing attention due to its promising performance. However, existing DMVSC methods still have two issues: (1) they mainly focus on using autoencoders to nonlinearly embed the data, while the embedding may be suboptimal for clustering because the clustering objective is rarely considered in autoencoders, and (2) existing methods typically have a quadratic or even cubic complexity, which makes it challenging to deal with large-scale data. To address these issues, in this paper we propose a novel deep multi-view subspace clustering method with anchor graph (DMCAG). To be specific, DMCAG firstly learns the embedded features for each view independently, which are used to obtain the subspace representations. To significantly reduce the complexity, we construct an anchor graph with small size for each view. Then, spectral clustering is performed on an integrated anchor graph to obtain pseudo-labels. To overcome the negative impact caused by suboptimal embedded features, we use pseudo-labels to refine the embedding process to make it more suitable for the clustering task. Pseudo-labels and embedded features are updated alternately. Furthermore, we design a strategy to keep the consistency of the labels based on contrastive learning to enhance the clustering performance. Empirical studies on real-world datasets show that our method achieves superior clustering performance over other state-of-the-art methods.

\end{abstract}
\section{Introduction}
Subspace clustering has been studied extensively over the years, which assumes that the data points are drawn from low-dimensional subspaces, and could be expressed as a linear combination of other data points. Especially, sparse subspace clustering (SSC)  \cite{elhamifar2013sparse}  has shown the ability to find a sparse representation corresponding to the points from the same subspace. After obtaining the representation of the subspace, the spectral clustering is then applied to obtain the final clustering results. On the other hand, low-rank subspace segmentation was proposed in \cite{liu2012robust} to find a low-rank subspace   representation. 
Despite some state-of-the-art performances have been achieved, most existing methods only focus on single-view clustering tasks.
 
In many real-world applications, with the exponential growth of data, the description of data has gradually evolved from a single source to multiple sources. For example, a video consists of text, images, and audio. A piece of text can be translated into various languages, and scenes can also be described from different perspectives. These different views often contain complementary information to each other. Making full use of the complementary and consistent information among multiple views could potentially improve the clustering performance.

Considering the diversity of information that comes with multi-view data, the research of multi-view subspace clustering (MVSC) has attracted increasing attention recently.  MVSC aims to seek a unified subspace from learning the fusion representation of multi-view data, and then separates data in the corresponding subspace. In the literature, many MVSC methods have been proposed \cite{zhang2015low,luo2018consistent,li2019flexible,wang2019multi,zheng2020feature,liu2021multi,si2022consistent}. However, one major weakness of existing approaches is their high time and space complexities, which are often quadratic or cubic in the number of samples $n$.
Recently, a number of anchor-based multi-view subspace clustering methods \cite{chen2011large,sun2021scalable,kang2020large,wang2022align,liu2022efficient} have been developed, which can achieve promising performance with a large reduction in storage and computational time. Generally, the anchor graphs are equally weighted and fused into the consensus graph, and then spectral clustering is performed to obtain the clustering result. 

On the other hand, inspired by deep neural networks (DNNs), many deep multi-view subspace clustering (DMVSC) methods have been proposed 
\cite{peng2020deep,wang2020deep,kheirandishfard2020multi,sun2019self,zhu2019multi,ji2017deep}. However, most DMVSC methods only consider the feature learning ability in networks, their performance is still limited because this learning process is typically independent of the clustering task.
 
To address the above issues, this paper proposes deep multi-view subspace clustering with anchor graph (DMCAG). DMCAG firstly utilizes deep autoencoders to learn low-dimensional embedded features by optimizing the reconstruction loss for each view independently. For each view, a set of points are chosen by performing $k$-means on the learned features to construct anchor graphs. Then, we utilize anchor graphs and embedded features as input to get the subspace representation respectively. Once the desired subspace representation is obtained, the clustering result  can be calculated by applying the standard spectral clustering algorithm. Unlike the most existing DMVSC methods, the proposed method does not output the clustering result from spectral clustering directly. 
Instead, we obtain a unified target distribution from this clustering result primarily, which is more robust than that generated by $k$-means \cite{xie2016unsupervised,xu2022self}, especially for clusters that do not form convex regions or that are not clearly separated. In a self-supervised manner, the Kullback-Leibler (KL) divergence between the unified target distribution and each view's cluster assignments is optimized. We iteratively refine embedding with pseudo-labels derived from the spectral clustering, which in turn help to obtain complementary information and a more accurate target distribution. Besides, to ensure the consistency among different views and avoid affecting the quality of reconstruction, we adopt contrastive learning on the labels instead of latent features. 
The main contributions of this paper are summarized as follows:
\begin{itemize}
    \item We propose a novel deep self-supervised model for MVSC. A unified target distribution is generated via spectral clustering which is more robust and can accurately guide the feature learning process. The target distribution and learned features are updated iteratively. 
    \item To boost the model efficiency, we use anchor graph to construct the graph matrix, avoiding constructing a $n$ × $n$ graph. This strategy can significantly reduce time complexity by sampling anchor points. 
    \item 
    We utilize contrastive learning on pseudo-labels to alleviate the conflict between the consistency objective and the reconstruction objective, thus consistent soft cluster assignments can be obtained among multiple views.
    \item Extensive experiments on real-world data sets validate the effectiveness and efficiency of the proposed model.
\end{itemize}

\section{Related Work}
\subsection{Deep Embedded Multi-View Clustering}
In recent years, the application of deep learning technology
in multi-view clustering has been a hot topic. Deep embedded clustering (DEC) \cite{xie2016unsupervised} utilizes the autoencoder to extract the low-dimensional latent representation from raw features  and then optimizes the student’s $t$-distribution and target distribution of the feature representation to achieve clustering. 
In contrast, traditional multi-view clustering algorithms mostly use linear and shallow embedding to learn the latent structure of multi-view data. 

However, these methods cannot utilize the nonlinear property of data availably, which is crucial to reveal a complex clustering structure \cite{ren2022deep}. Deep embedded multi-view clustering with collaborative training (DEMVC) \cite{xu2021deep} is a novel framework for multi-view clustering, in which a shared scheme of the auxiliary distribution is used to improve the performance of the clustering. 
By assuming that clustering structures with high discriminability play a significant role in clustering, self-supervised discriminative feature learning for multi-view clustering (SDMVC) \cite{xu2022self} leverages the global discriminative information contained in all views' embedded features. During the process, the global information will guide the feature learning process of each view. The autoencoder is usually utilized to capture the most important features present in the data. A suitable autoencoder can obtain more robust representations. Deep embedding clustering based on contractive autoencoder (DECCA) \cite{diallo2021deep} simultaneously disentangles the problem of learned representation by preserving important information from the initial data while pushing the original samples and their augmentations together. With the introduction of the contractive autoencoders, the learned features are better than normal autoencoders. The available information provided by latent graph is often ignored, deep embedded multi-view clustering via jointly learning latent representations and graphs (DMVCJ) \cite{huang2022deep} utilizes the graphs from latent features to promote the performance of deep MVC models. By learning the latent graphs and feature representations jointly, the available information implied in latent graph can improve the clustering performance. The self-supervised manner is widely used in deep  clustering, which is valuable to migrate it to other clustering algorithms.

\subsection{Multi-View Subspace Clustering}
Although many methods exist in subspace clustering, such as low-rank representation subspace clustering (LRR) \cite{liu2012robust}, sparse subspace clustering (SSC) \cite{elhamifar2013sparse}, most of the multi-view subspace clustering methods adopt self-representation to obtain the subspace representation. Low-rank tensor constrained multi-view subspace clustering (LMSC) \cite{zhang2017latent} learns the latent representation based on multi-view features, and generates a common subspace representation rather than that of individual view. Flexible multi-view representation learning for subspace clustering (FMR) \cite{li2019flexible} avoids using partial information for data reconstruction and makes the latent representation well adapted to subspace clustering. 

Most of existing MVSC methods are challenging to apply in large-scale data sets. Fortunately, inspired by the idea of anchor graph which can help reduce both storage and computational time, a lot of anchor-based MVSC methods have been proposed. Large-scale multi-view subspace clustering (LMVSC) \cite{kang2020large} can be solved in linear time, which selects a small number of instances to construct anchor graphs for each view, and then integrates all anchor graphs from each views. It thus can perform spectral clustering on a small graph. Efficient one-pass multi-view subspace clustering with consensus anchor (CGMSC) \cite{liu2022efficient} unifies fused graph construction and anchor learning into a unified and flexible framework so that they seamlessly contribute mutually and boost performance. Fast multi-view anchor correspondence clustering (FMVACC) \cite{wang2022align} finds that the selected anchor sets in multi-view data are not aligned, which may lead to inaccurate graph fusion and degrade the clustering performance, so an anchor alignment module is proposed to solve the anchor-unaligned problem (AUP). 

With the development of deep neural networks, a number of deep multi-view subspace clustering methods have been proposed. Deep subspace clustering with $\ell_1$-norm (DSC-L1) \cite{peng2020deep} learns nonlinear mapping functions for data to map the original features into another space, and then the affinity matrix is calculated in the new space. Deep multi-view subspace clustering with unified and discriminative learning (DMSC-UDL) \cite{wang2020deep} integrates local and global structure learning simultaneously, which can make full use of all the information of the original multi-view data. Different from the above-mentioned deep multi-view clustering methods, we combine representation learning and spectral clustering into a unified optimization framework, the clustering results can be sufficiently exploited to guide the representation learning for each view.

\section{Method}
\paragraph{Problem Statement. }Given multi-view data $X= \{X^v \in \mathbb{R}^{d_v \times n }\}_{v=1}^V$ with $V$ views, $d_v$ is the dimension of  the $v$-th view, and $n$ is the instance number. The target of MVSC is to divide the given instances into $k$ clusters. 
\subsection{Motivation}
Subspace clustering aims to find out an underlying subspace which expresses each  point as a linear combination of other points. The final clustering assignment is obtained by performing spectral clustering on the learned subspace. 
Basically, it can be mathematically denoted as:
\begin{equation}
\begin{aligned}
\mathop{\min}_{X^v} &{\sum_{v=1}^{V}||X^v-X^vS^v ||_F^2}+\gamma ||S^v||^2_F  \\
   &s.t.\ \  S^v \geqslant 0 , S^v \textbf{1}=\textbf{1}, 
\end{aligned}
\end{equation}
where $S^v\in R^{n \times n}$ is the learned subspace of $v$-th view and $\gamma $ is a trade-off coefficient that controls the  sparsity of $S^v$. The constraints on $S^v$ ensure that $S^v$ is non-negative and $\sum_j S_{ij}=1$. When the adjacency graphs are obtained, spectral clustering can be performed on $S^v$ to get the clustering result. 
Existing MVSC methods aim at learning concrete subspace effectively and mining global information of all views to improve the clustering assignment quality, but still meet some challenges:  

(1) Most MVSC methods require high time complexity (at least $O(n^2 k)$) to calculate the clustering result on raw features, which requires more storage and time and is also difficult to deal with large-scale datasets.
Additionally, some DMVSC methods such as \cite{ji2017deep,li2021self}  focus on  learning  subspace by nonlinearly embedding the data. 
Since the connection between the learned subspace and clustering is weak, the learned features may not be optimal for the clustering task.

(2) From existing self-supervised MVC methods, we can observe that most of them heavily depend on the quality of latent representations to supervise the learning process. \cite{xu2022self} fuses the embedded features of all views and utilizes $k$-means \cite{kmeans1967} to obtain the global target distribution. However, $k$-means does not perform well on some data structure such as non-convex structure compared with spectral clustering  \cite{ng2001spectral} which is more robust to different distributions. Hence, their obtained pseudo-labels may not reflect clear  clustering structure to guide the latent feature learning process. 

(3) Some views of an instance might have wrong clustering assignment, leading to the clustering inconsistency. Some deep-learning based MVC methods achieve the consistency by directly learning common information on latent features \cite{cheng2021multi}, which may reduce the complementarity of each view's information due to the conflict between learning exact latent representations and achieving consistency. 
\par
To address the above-mentioned issues, we propose a novel DMCAG framework as shown in Fig. \ref{framework}. We use the anchor method to construct the graph matrix on latent features of each view which requires less time and storage. After that, we obtain global pseudo-labels by performing spectral clustering on the integrated anchor graph. Since spectral clustering is more robust to the data distribution, our self-supervised process can learn higher discriminative information from each view.
Then, we adopt contrastive learning  on pseudo-labels of latent features to maintain the view-private information and achieve the clustering consistency among all views.  

\begin{figure*}[!t] 
    \centering %
    \includegraphics[width=18cm]{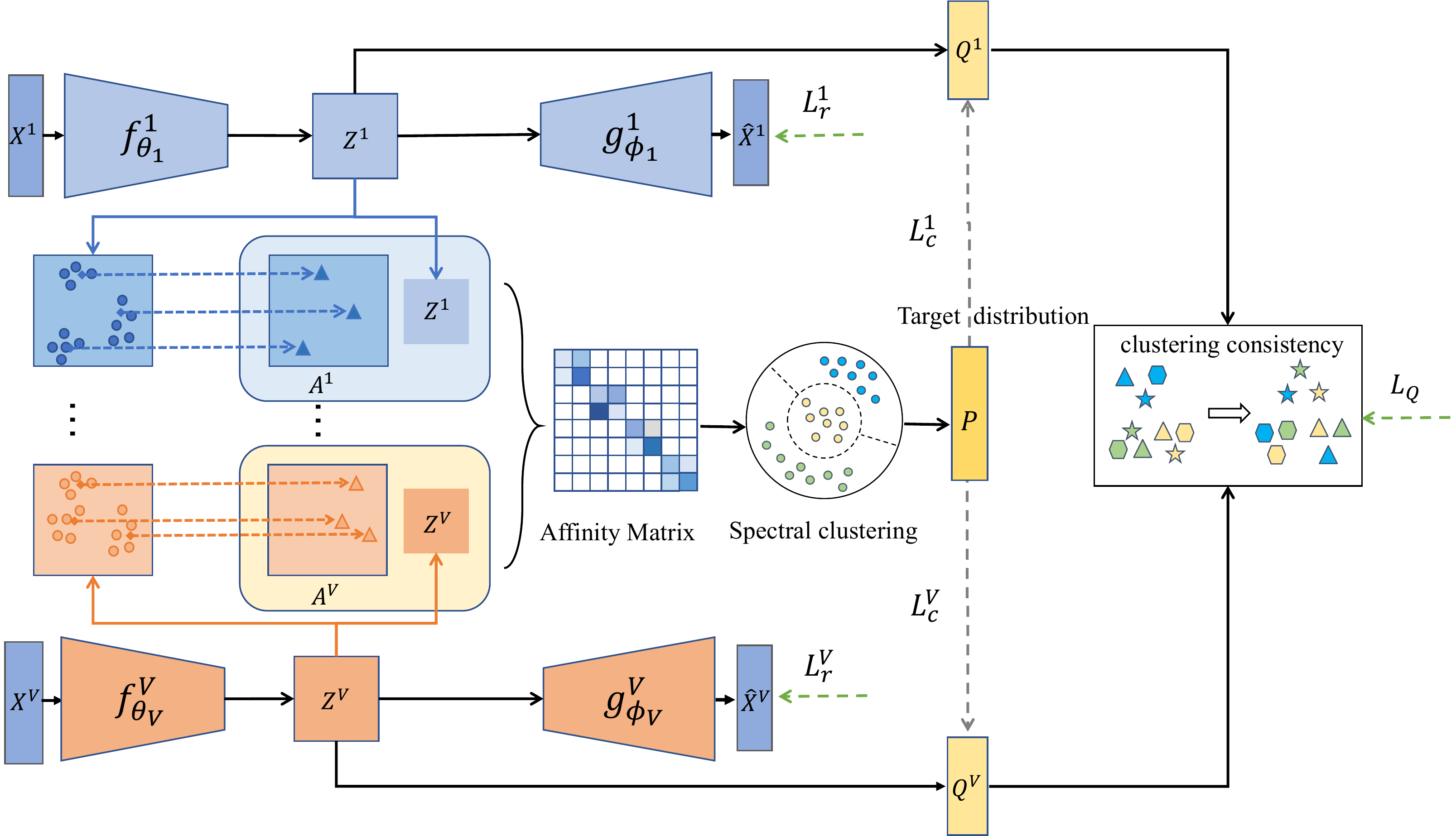}
    \caption{The framework of DMCAG. For the $v$-th view, $X^v$ denotes the input data, $Z^v$ denotes the embedded features, $A^v$ is a set of clustering centroids with the number of $m$,   and $Q^v$ is the soft cluster assignment distribution. $P$ denotes the unified target distribution obtained through spectral clustering.}
    \label{framework}
\end{figure*}

\subsection{ Learning Anchor Graph via Autoencoders }
As for much redundancy in the raw data, we utilize the deep autoencoder to extract the latent representations of all views. Through the encoder $\ f_{\theta^v}^v$ and decoder $\ g_{\phi^v}^v$, where $\theta^v$ and $\phi^v$ are learnable parameters, $X^v$ is encoded as $Z^v \in R^{l\times n}$ ($l$ is the same for all views) via $\ f_{\theta^v}^v$ and $Z^v$ is decoded as $\hat{X}^v$ via $g_{\phi^v}^v$. The reconstruction loss is defined as:
\begin{equation}
\label{l_r}
{L}_r=\sum_{v=1}^V{L}_r^v=\sum_{v=1}^V ||X^v - g_{\phi^v}^v(f_{\theta^v}^v(X^v)) ||_F^2. 
\end{equation}Inspired by \cite{kang2020large}, we adopt the anchor graph to replace the full adjacency matrix $S$, which is formulated as:
\begin{equation}
\label{l_a}
\begin{aligned}
\mathop{\min}_{C^v} {L}^v_a&={\sum_{v=1}^{V}||Z^v-A^v (C^v)^T ||_F^2}+\gamma ||C^v||^2_F  \\
&s.t. \ \  C^v \geqslant 0 , (C^v)^T \textbf{1}=\textbf{1} ,
\end{aligned}
\end{equation}
where $A^v \in R^{l\times m}$ ($m$ is the anchor number) is a set of  clustering centroids through $k$-means on the embedded features $Z^v$, and $C^v\in R^{n\times m}$ is the anchor graph matrix reflecting relationship between $Z^v$ and $A^v$. The problem above can be solved by convex quadratic programming. We refer the readers to \cite{wolfe1959simplex} for more details about quadratic programming. 
 \subsection{Spectral Self-Supervised Learning }
As \cite{ng2001spectral} shows, for clusters that are not clearly  separated or do not form convex regions, the spectral method  can also reliably find clustering assignment. Hence, we use spectral clustering to obtain more robust global target distribution to guide the self-training. 
Spectral clustering \cite{ng2001spectral} can be mathematically described  as finding   $Q\in R^{n\times k}$ by maximizing:
\begin{equation}
\label{spec}
\mathop{\max}_{Q}  Tr\ (Q^TSQ) \ \ s.t.\  Q^TQ=I.
\end{equation}
Following Theorem \ref{th1} introduced by \cite{chen2011large,kang2020large},  we present an approach to approximate the singular vectors of  $S$  in latent space.
\begin{theorem} 
\label{th1} \cite{chen2011large,kang2020large}
Given a similarity matrix $S$, which can be decomposed as $(C^T)C$. Define singular value decomposition (SVD) of $C$ as $U{\wedge}V^T$, then we have 
\begin{equation}
\label{t1}
\mathop{\max}_{Q^TQ=I}  Tr\ (Q^TSQ) \Longleftrightarrow   \mathop{\min}_{Q^TQ=I,H}  ||C-QH^T||^2_F. 
\end{equation}
And the optimal solution $Q^{*}$ is equal to U.
\end{theorem}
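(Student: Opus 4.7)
The plan is to profile out the auxiliary variable $H$ from the reconstruction problem, reduce both sides to a single trace maximization, and then identify the maximizer via the SVD. I would work throughout with the identification $S = CC^{T}$ so that the stated optimum $Q^{*} = U$ comes out correctly; stating this convention explicitly at the start avoids ambiguity, since flipping the decomposition to $S = C^{T}C$ would instead produce $Q^{*} = V$.

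First, I would expand the Frobenius norm as a trace,
\begin{equation*}
\|C - QH^{T}\|_F^{2} = \|C\|_F^{2} - 2\,\mathrm{Tr}(Q^{T} C H) + \mathrm{Tr}(H Q^{T} Q H^{T}),
\end{equation*}
and use the orthonormality constraint $Q^{T}Q = I$ to collapse the last term to $\|H\|_F^{2}$. Setting the gradient in $H$ to zero then yields the closed-form minimizer $H^{*} = C^{T} Q$. Substituting this back simplifies the objective to $\|C\|_F^{2} - \mathrm{Tr}(Q^{T} CC^{T} Q)$. Because $\|C\|_F^{2}$ does not depend on $Q$, minimizing over $(Q, H)$ is equivalent to maximizing $\mathrm{Tr}(Q^{T} S Q)$ subject to $Q^{T}Q = I$, which establishes the stated equivalence.

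For the identification $Q^{*} = U$, I would substitute the SVD $C = U\Lambda V^{T}$ into the reduced objective to rewrite $CC^{T} = U \Lambda^{2} U^{T}$, so the maximization becomes $\max_{Q^{T}Q = I}\mathrm{Tr}(Q^{T} U \Lambda^{2} U^{T} Q)$. This is the classical Ky Fan trace maximization on a positive semidefinite matrix: its value equals the sum of the top $k$ squared singular values of $C$, and the maximizer (unique up to rotation within the leading eigenspace) is obtained by taking $Q$ to be the first $k$ columns of $U$, i.e., $Q^{*} = U$ in the block sense the theorem intends.

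The main obstacle is essentially bookkeeping rather than conceptual depth. Both halves follow from completing the square and Ky Fan's principle, neither of which requires a delicate argument. The one point I would flag carefully is keeping the SVD convention aligned with the chosen form of $S$, so that the eigenvectors of $S$ coincide with the columns of $U$ and the theorem's conclusion holds verbatim; once that is fixed, the remainder of the proof is a short chain of standard manipulations.
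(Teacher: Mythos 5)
Your proof follows essentially the same route as the paper's: profile out $H$ via the closed-form minimizer $H^{*}=C^{T}Q$, substitute back to reduce the problem to $\max_{Q^{T}Q=I}\mathrm{Tr}(Q^{T}SQ)$, and identify $Q^{*}=U$ from $S=CC^{T}=U\Lambda^{2}U^{T}$. Your two refinements---invoking the Ky Fan principle to actually justify that the leading left singular vectors attain the maximum (the paper asserts this without argument), and flagging that the theorem's stated decomposition $S=(C^{T})C$ is inconsistent with the $S=CC^{T}$ convention the proof requires for $Q^{*}=U$ to be correct---are both valid and tighten the paper's presentation rather than depart from it.
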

\begin{proof}
From Eq. (\ref{t1}),  one can observe that the optimal  $H^{*}=C^TQ$. Substituting  $H^{*}=C^TQ$ into Eq. (\ref{t1}), the following equivalences hold 
\begin{align*}
 & \mathop{\min}_{Q^TQ=I,H}  ||C-QH^T||^2_F   \Longleftrightarrow  \mathop{\min}_{Q^TQ=I}  ||C-QQ^TC||^2_F \\
 & \Longleftrightarrow  \mathop{\max}_{Q^TQ=I}  Tr\ (Q^TCC^TQ) 
 \Longleftrightarrow \mathop{\max}_{Q^TQ=I}  Tr\ (Q^TSQ). 
\end{align*}
Furthermore, one can obtain 
\begin{align*}
   S  = CC^T &=U{\wedge}V^T(U{\wedge}V^T)^T 
      =U{\wedge}(V^TV){\wedge}U^T 
      =U{\wedge}^2 U^T.
\end{align*}
Therefore, we could use  left singular vectors of $C$   to approximate the eigenvectors of $S$.
\end{proof}

According to Theorem \ref{th1}, we calculate eigenvectors $\{U^v\}_{v=1}^V$ of $\{C^v\}_{v=1}^V$ to approximate the eigenvectors of full similarity matrix. To fully exploit the complementary information across all views,  we concatenate all eigenvectors ${U}=\{U^1, U^2, \dots, U^V\}\in R^{n\times(V k)}$  to generate the global feature via the spectral method. After obtaining the global feature ${U}$,  we apply $k$-means to calculate the  cluster centroids $\{  {\mu_j}\}_{j=1}^k$: 
\begin{equation}
\label{globalk}
  \min _{{\mu_1}, \ldots,{\mu_k}} \sum_{i=1}^{n} \sum_{j=1}^{k}\left\|{U}{(i,:)}- {\mu_j}\right\|^{2}.
\end{equation}
Similiar to DEC \cite{xie2016unsupervised}, which is a popular single-view deep clustering method utilizing Student’s t-distribution \cite{van2008visualizing}, the soft clustering assignment $t_{ij}$  between global feature ${U} $ and  each cluster  centroid $ {\mu_j}$ could be computed as:
\begin{equation}
    t_{i j}=\frac{(\alpha+\left\|{U}{(i,:)}-{\mu_j}\right\|^{2})^{-1}}{\sum_{j}(\alpha+\left\|{U}{(i,:)}-{\mu_j}\right\|^{2})^{-1}}.
\end{equation}

To increase the discriminability of the global soft assignments, the target distribution $P$ is formulated as:
\begin{equation}
\label{o}
    p_{i j}=\frac{(t_{i j}^{2} / \sum_{i} t_{i j})}{\sum_{j}(t_{i j}^{2} / \sum_{i} t_{i j})}.
\end{equation}

We obtain soft clustering assignment (pseudo-label) $Q^{v}=\left[q_{1}^{v}, q_{2}^{v}, \ldots , q_{N}^{v}\right]$  of each view, where $q_{i j}^{v}$  can be  considered as  the probability of the $i$-th instance  belonging  to the $j$-th cluster in the $v$-th view.
It is defined as:
\begin{equation}\label{C}
    q_{i j}^{v}=\frac{ (1+\left\|{z}_{i}^{v}-\mu_{j}^{v}\right\|^{2} )^{-1}}{\sum_{j} (1+\left\|{z}_{i}^{v}-\mu_{j}^{v}\right\|^{2} )^{-1}},
\end{equation}
where $\mu_j^{v}$ denotes the $j$-th cluster centroid of the $v$-th view.

Overall, we use Kullback-Leibler divergence between the unified target distribution $P$ and view-private soft assignment distribution $Q^v$ to guide autoencoders to optimize latent features containing higher discriminative information, which can be formulated as:
\begin{equation}
\label{l_c}
    L^v_{c}= D_{K L}\left(P \| Q^{v}\right)=\sum_{v=1}^{V}\sum_{i=1}^n\sum_{j=1}^{k} p_{i j} \log \frac{p_{i j}}{q_{i j}^{v}}.  
\end{equation}

As the target distribution obtained from spectral clustering is  adaptive to different data distributions, we can get more explicit cluster structures to guide the self-training process  compared with $k$-means. To extract embedded features that reflect correct information of raw features and learn an accurate assignment for clustering, we jointly optimize the reconstruction of autoencoders  and self-supervised learning. The total loss function ${L_s}$ is defined as:
\begin{equation}
    {L_s}= \sum_{v=1}^V({L_{r}^{v}}+{L_{c}^{v}} ). 
    \label{ls}
\end{equation}
\subsection{Label Consistency Learning}
To guarantee the same soft assignment distribution of all views represent the same cluster, we need to achieve the consistency of pseudo-labels. We adopt contrastive learning to  the soft assignment obtained from 
Eq. (\ref{C}). For the $m$-th view, $Q^m( : ,j)$ have 
 $(Vk-1)$ pairs, where the $(V-1)$  pairs $\{Q^m( : ,j),Q^n( : ,j)\}_{m\neq n}$ are positive and the rest $V(k-1)$  pairs are negative. Thereby the contrastive loss can be defined as:
\begin{equation}
    {L}^{mn}_Q=-\frac{1}{k}   \sum^k_{j=1} \log \frac{e^{d(Q^m( : ,j),Q^n( : ,j))/ \tau}}{\sum^k_{k'=1}\sum_{v=m,n}e^{d(Q^m( : ,j),Q^v( : ,k'))/\tau}- e^{1/\tau}},
\end{equation}
where $d( \cdot,\cdot )$  represents the cosine distance to meausure the similiarty between two labels, $\tau$  is  the temperature parameter. Moreover, to avoid the samples being assigned into a single cluster, we use the cross entropy as a regularization term.
Generally, the label consistency learning is formulated as:
\begin{equation}
\label{l_q}
{L}_Q=\frac{1}{2}\sum^V_{m=1}\sum_{n\neq m} {L}^{mn}_Q +\sum^V_{m=1}\sum^k_{j=1}s_{j}^m \log s_{j}^m, 
\end{equation}
where $s_{j}^m=\frac{1}{N}\sum^N_{i=1} q^m_{ij} $. 
After finetuning the labels via contrastive learning, the similarities of 
positive pairs are enhanced and thereby the obtained latent features  have clearer clustering structure. At last, the  clustering prediction  ${y}$ is obtained through   the target distribution $P$  calculated by performing $k$-means on ${U}$.
\begin{algorithm}[!t]
	\caption{Deep Multi-View Subspace Clustering with Anchor Graph (DMCAG)} 
	\label{alg} 
	\begin{algorithmic}
        \STATE  \textbf{Input}: multi-view dataset $X$ , cluster number $k$.
		\STATE \textbf{Initialization}: Get $\{\theta^v , \phi^v , \mu^v ,A^v  \}_{v=1}^V $ by pretraining autoencoders and $k$-means. Initialize  $\{C^v \}_{v=1}^V$  via  quadratic programming. 
		        \STATE  Update $\{\theta^v , \phi^v\, Q^v \}_{v=1}^V $ by performing self-supervised learning 	via Eq. (\ref{ls}).
           \STATE Performing   contrastive learning  on $\{Q^v \}_{v=1}^V$ 
via Eq. (\ref{l_q}). 
           \STATE Obtain  $\{C^v \}_{v=1}^V$  via Eq. (\ref{l_a}).
		\STATE  \textbf{Output}: Cluster assignment ${y}$  via Eq. (\ref{globalk}). 
	\end{algorithmic} 
\end{algorithm}
\subsection{Optimization}
The detailed optimization procedure is summarized in Algorithm \ref{alg}. 
We adopt the Adam method to train the autoencoders. At the beginning,  autoencoders are initialized by Eq. (\ref{l_r}). After that, we solve  Eq. (\ref{l_a}) via convex quadratic programming to obtain ${U}$ and calculate the global target distribution $P$. Then, the spectral self-supervised learning is adopted to learn more representive embeddings.  After performing the self-supervised learning, the  contrastive learning is conducted to achieve the clustering consistency. At last, we run $k$-means on   ${U}$ to  obtain the final clustering result ${y}$:

\begin{equation}
{y_i}= \mathop{\mathrm{argmax}}_{j}\ (p_{ij}). 
\end{equation}

\section{Experiments}

\subsection{Experimental Settings}
\begin{table}[!h]
\setlength{\tabcolsep}{0.5mm}
\centering 
{
\begin{tabular}{|c|c|c|c|c| }
\hline
Dataset & Sample & View & Dimension \\ \hline
MNIST-USPS & 5000 & 2 & [[28,28], [28,28]]\\\hline
Multi-COIL-10 & 720 & 3 & [[32,32], [32,32], [32,32]] \\\hline
BDGP & 2500 & 2 & [1750, 79]  \\ \hline
UCI-digits & 2000 & 3 & [240, 76, 64 ] \\\hline
Fashion-MV & 10000 & 3 & [784, 784, 784]  \\ \hline
HW & 2000 & 6 & [216, 76, 64, 6, 240, 47] \\
\hline
\end{tabular}
}
 \caption{The statistics of experimental datasets.}
 \label{datasets}
\centering 
\end{table}

\begin{table*}[!t]
\centering
\begin{tabular}{lcccccccccc}
\hline
  Datasets & $K$-means  &SC   & DEC & CSMSC  &FMR & SAMVC & LMVSC& CGMSC& FMVACC &Ours      \\
  \hline
 \multicolumn{11}{c}{ACC}\\
   \hline
MNIST-USPS&76.78  & 65.96 &73.10   & 72.68&63.02 &69.65&38.54  & 91.22& \underline{98.67}& \textbf{99.58 }\\
 {Multi-COIL-10}& 73.36 &33.75  &74.01  &97.64 & 78.06&84.31 &63.79 & \underline{99.86} & 93.20 & \textbf{100.00} \\
 
BDGP &  43.24&  51.72&  \underline{94.78} & 53.48&95.12 &51.31 &35.85  &45.60&  58.63& \textbf{98.00}\\
 
UCI-digits& 79.50 & 63.35 & 87.35 &88.20 & 80.10 & 74.20 & 74.60& 76.95 & \underline{89.47} &\textbf{95.60}\\

Fashion-MV & 70.93 & 53.54 & 67.07 & 77.61& -& 62.86 & 43.43 &\underline{82.79}&79.62&\textbf{97.89} \\

HW& 75.45 & 77.69 & 81.13 & 89.80&86.05 &76.37 &\underline{91.65} &69.10&89.45 & \textbf{97.90}   \\
\hline
 \multicolumn{11}{c}{NMI}\\
   \hline
MNIST-USPS&72.33  & 58.11 &71.46   & 72.64& 60.90& 60.99 &63.09 &88.24  & \underline{96.74} & \textbf{ 99.86}\\
 Multi-COIL-10& 76.91 &12.31  & 77.43 & 96.17&80.04 &92.09 &75.83 & \underline{99.68} & 93.39& \textbf{100.00}   \\
 
BDGP &56.94  &58.91  &86.92 & 39.92&\underline{87.69} &45.15 & 36.69&27.15& 36.81 & \textbf{94.69}   \\
 
UCI-digits & 77.30  & 66.60 & 79.50 &80.68 &72.13 &74.73 &74.93 & 77.62 &  \underline{84.35} & \textbf{91.10} \\

Fashion-MV &65.61  & 57.72 &72.34 &77.91&- &68.78 & 46.02 & \underline{86.48} & 76.31& \textbf{95.40}  \\

HW&  78.58& 86.91 & 82.61 & 82.95& 76.49& 84.41& 84.43&81.83 & \underline{85.98}& \textbf{95.25}  \\
\hline
 \multicolumn{11}{c}{ARI}\\
   \hline
MNIST-USPS&63.53 & 48.64& 63.23  &64.08 & 49.73& 74.58&27.74   &   86.46&\underline{97.65} &  \textbf{99.07}   \\
 {Multi-COIL-10}&  64.85&  14.02 & 65.66  &94.89 & 70.59 &88.75 &55.05 &\underline{99.69}& 92.47 & \textbf{100.00} \\
 
BDGP &  26.04& 31.56&87.02 & 33.59&\underline{88.38} &19.60 &35.06  &21.99&  44.25& \textbf{95.11}\\
 
UCI-digits& 71.02 & 54.07 & 75.69 &76.72 &65.53 &74.25 &74.27 &70.08  & \underline{83.33}& \textbf{90.59}\\

Fashion-MV &56.89  & 42.61 & 62.91&70.28 &- &56.65  &41.12  &\underline{78.27}& 72.92& \textbf{95.48} \\

HW&66.72  & 75.26 & 74.25 &79.50 &72.59 &73.87 &83.20 &69.54 &\underline{85.04}& \textbf{95.40}  \\
\hline
\end{tabular}
 \caption{ Results of all methods on six datasets. The best result in each row is shown in bold and   the second-best is underlined.}
\label{comparision}
\end{table*}
\begin{table*}[!t]

\centering
\begin{tabular}{|r|c|c|c|c|c|c|c|c|c|c|c|c| }
\hline
  \    & \multicolumn{3}{|c|} {Components}      & \multicolumn{3}{|c|}{MNIST-USPS} & \multicolumn{3}{|c|}{Fashion-MV}  & \multicolumn{3}{|c|}{BDGP}  \\
\hline
\    & $L_r$ &  $L_s$ & $L_Q$    &  ACC   &NMI&ARI              & ACC & NMI&ARI &ACC & NMI&ARI   \\
\hline
(A)    & \ding{51} &  \ding{55}   & \ding{55}    & 64.26 &65.37           &    52.03 &62.80&71.57&55.56     &37.96&36.42&  14.10        \\
\hline
(B)    & \ding{51} &  \ding{51}    &   \ding{55}    & 83.44 & 82.34 &  73.67  &75.12&79.65& 67.08    &61.88&65.13&  46.56         \\
\hline
(C)    &  \ding{51} &   \ding{55}   &   \ding{51}     & 84.10  &90.21 & 82.79 &93.38&90.78&87.55   &71.28&64.34& 56.15  \\ \hline
(D)    &  \ding{51} &   \ding{51}   &   \ding{51}     & 99.58 & 98.86 & 99.07 &97.89&95.40&95.48  &98.00 &94.69  & 95.11   \\

\hline
\end{tabular}
 \caption{ Ablation studies on DMCAG.  }
 \label{ab}
\end{table*}

\paragraph{Datasets.}
As shown in Table \ref{datasets}, our experiments are carried out on six datasets.
Specifically, \textbf{MNIST-USPS} \cite{peng2019comic} collects from two handwritten digital image datasets, which are treated as two views.
\textbf{Multi-COIL-10} \cite{xu2021multi} collects 720 grayscale object images with size of $32 \times 32$ from 10 clusters, where the different views  represent  various  poses of objects.
\textbf{BDGP} \cite{cai2012joint}
contains  5 different types of drosophila. Each sample has the visual and textual views.
\textbf{UCI-digits}\footnote{https://archive.ics.uci.edu/ml/datasets/Multiple\%2BFeatures} is a collection of 2000 samples with 3 views, which has 10 categories.
\textbf{Fashion-MV}  \cite{xiao2017fashion}  contains images from 10 categories, where we treat three different styles of one object as three views.
\textbf{Handwritten Numerals (HW)}\footnote{https://archive.ics.uci.edu/ml/datasets.php} contains 2000 samples from 10 categories corresponding to numerals 0-9. Each sample has six visual views.

\paragraph{Comparison Methods.} Comparsion methods include 3 traditional single-view clustering methods, i.e., $k$-means \cite{kmeans1967}, SC (Spectral clustering \cite{ng2001spectral}), and DEC (Deep embedded clustering \cite{xie2016unsupervised}),  and 6 state-of-art  MVC methods, i.e., CSMSC (Consistent and specific multi-view subspace clustering\cite{luo2018consistent}), FMR (Flexible multi-view representation learning for subspace clustering), SAMVC (Self-paced and auto-weighted multi-view clustering \cite{ren2020self}),
LMVSC (Large-scale multi-view subspace clustering in linear time \cite{kang2020large}), CGMSC (Multi-view subspace clustering with adaptive locally consistent graph regularization \cite{liu2021multi}), and
 FMVACC (Fast multi-view anchor-correspondence clustering  \cite{wang2022align}). 
\paragraph{Evaluation Metrics.}
We evaluate the effectiveness of clustering by three commonly used metrics, i.e., clustering accuracy (ACC), normalized mutual information (NMI), and adjusted rand index (ARI). A higher value of each evaluation metric indicates a better clustering performance.
\paragraph{Implementation.}
The convolutional (Conv) and fully connected (Fc) neural
networks are applied according to different types of data.
For the image datasets, i.e., MNIST-USPS and  Multi-COIL-10,  we use the convolutional autoencoder (CAE) for each view to learn embedded features. The encoder is  Input-Conv$^{32}_4$-Conv$^{64}_4$-Conv$^{64}_4$-Fc$_{10}$.  
Since all views of BDGP, UCI-digits, Fashion-MV, and HW  are vector data, we utilize fully connected autoencoder. 
For each view, the encoder is Input-Fc$_{500}$-Fc$_{500}$-Fc$_{2000}$-Fc$_{10}$. All the decoders are symmetric with the corresponding encoders. Following \cite{kang2020large}, we select anchor numbers in the range [10, 100].  
We select $\gamma$ from \{0.1, 1, 10\}.  Temperature parameter $\tau$ is set to 1 and $\alpha$ is set to 0.001 for all experiments.
All experiments  are performed on Windows PC with Intel (R) Core (TM) i5-12600K CPU@3.69 GHz, 32.0 GB RAM, and GeForce RTX 3070ti GPU (8 GB caches).  For fair comparison, all baselines are tuned to the best performance according to the corresponding papers. 

\begin{figure*}[!t]
	\centering
	\begin{subfigure}{0.328\linewidth}
		\centering
		\includegraphics[width=0.9\linewidth]{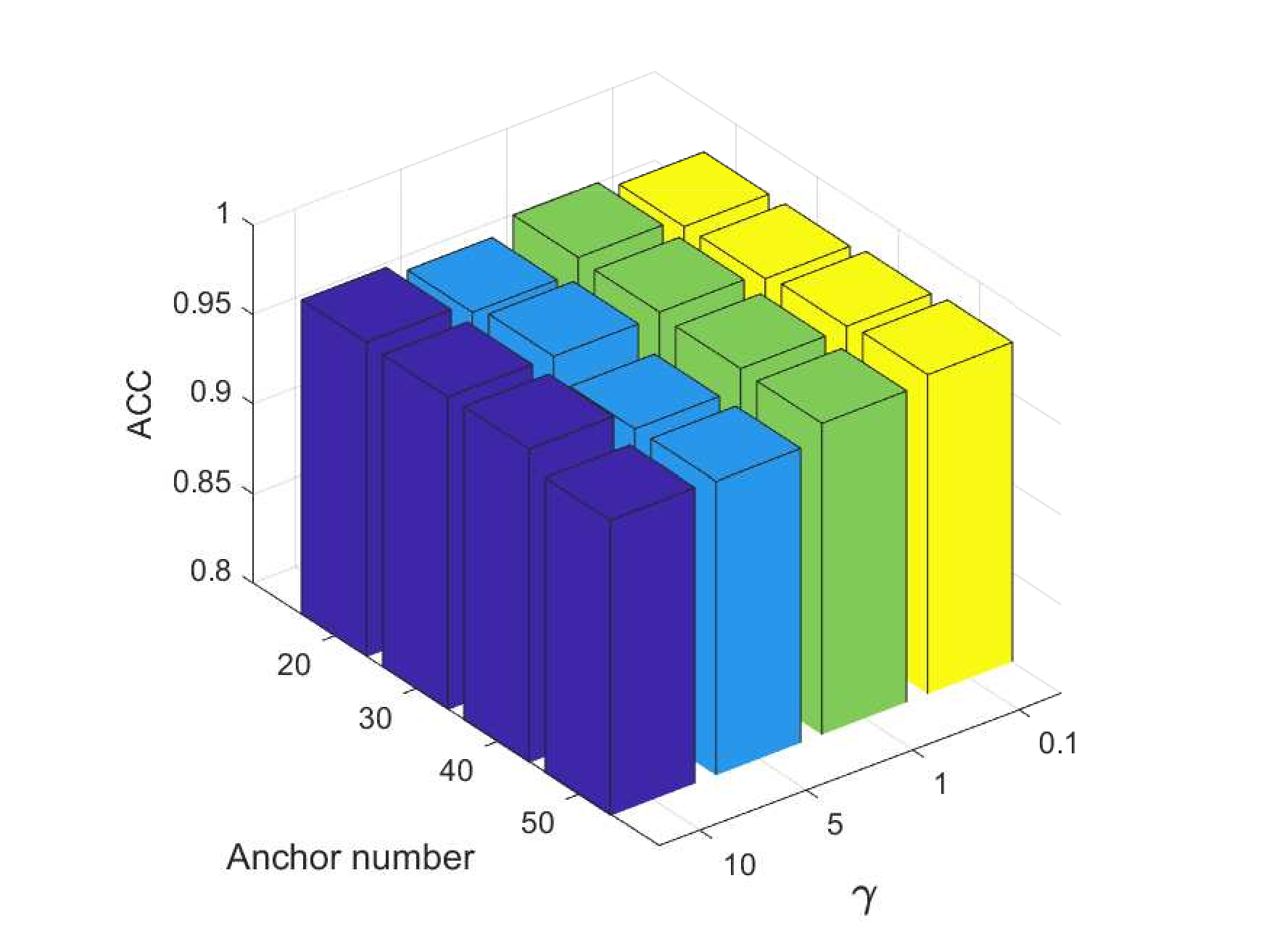}
		\label{}
	\end{subfigure}
	\centering
	\begin{subfigure}{0.328\linewidth}
		\centering
		\includegraphics[width=0.9\linewidth]{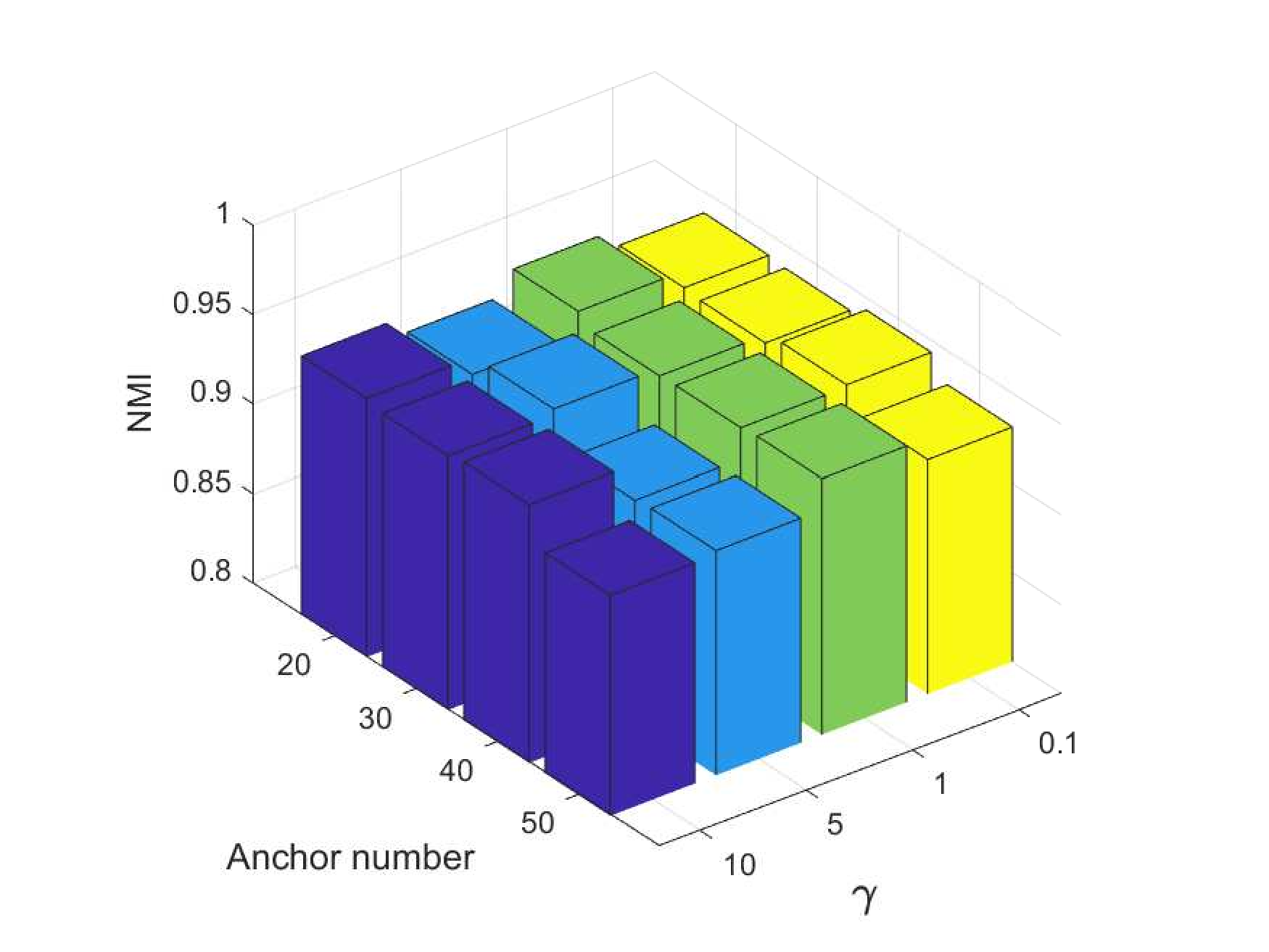}
		\label{}
	\end{subfigure}
	\centering
	\begin{subfigure}{0.328\linewidth}
		\centering
		\includegraphics[width=0.9\linewidth]{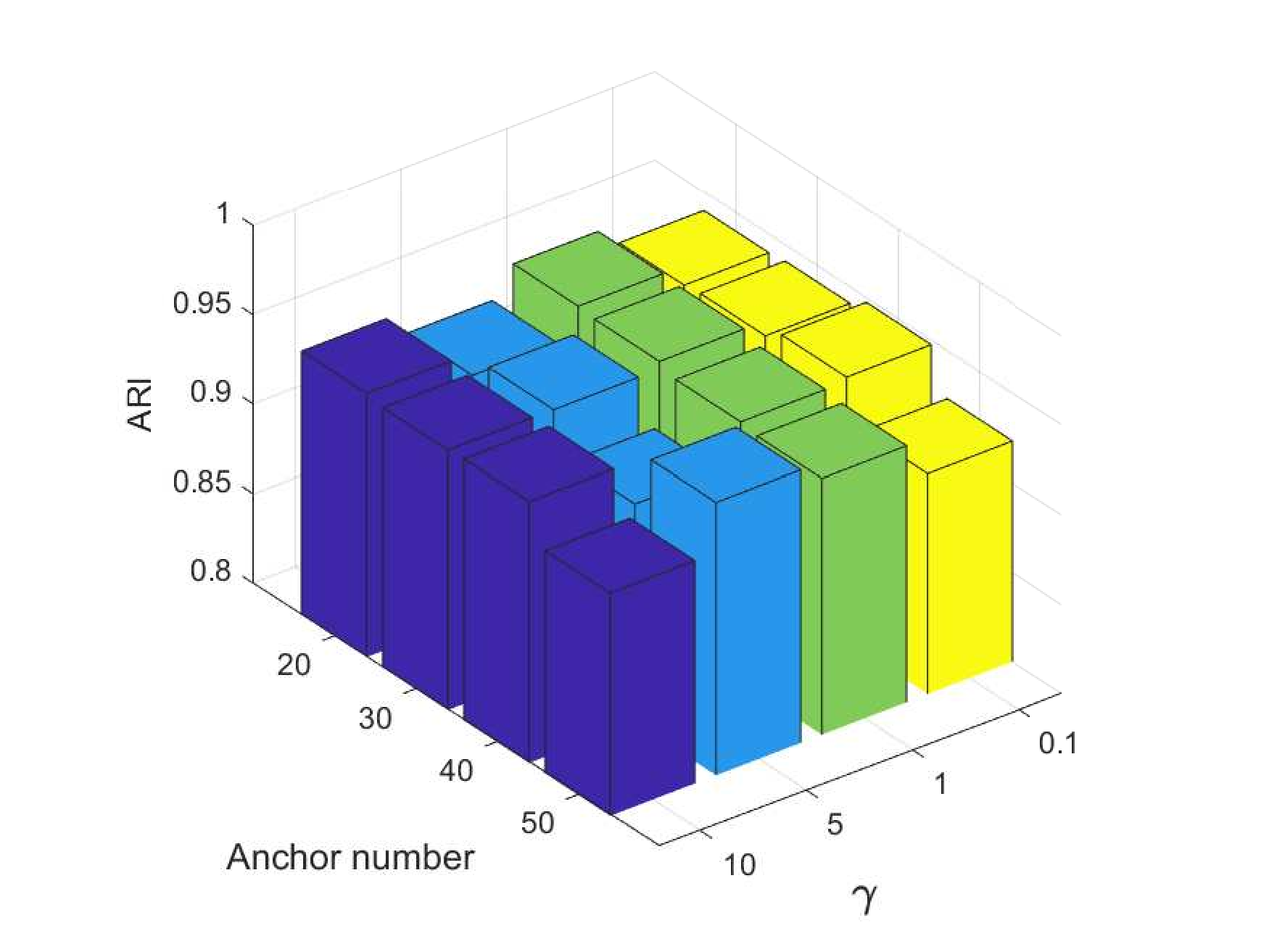}
		\label{}
	\end{subfigure}
	\caption{Clustering performance with different parameter settings on HW dataset.}
	\label{parameter}
\end{figure*}

\begin{figure*}[!t]
	\centering
	\begin{subfigure}{0.24\linewidth}
		\centering
		\includegraphics[width=1.1\linewidth]{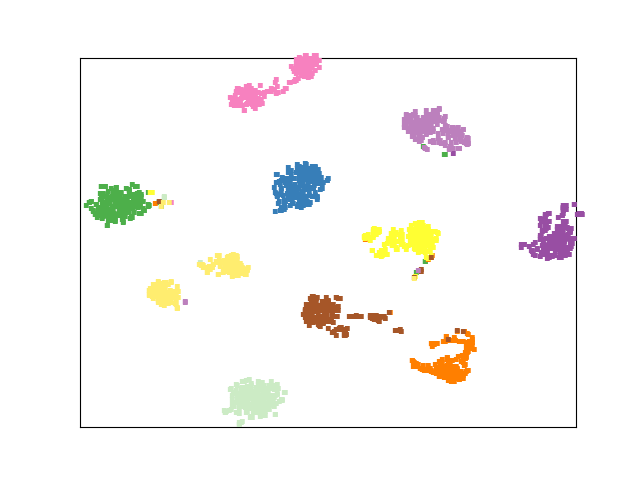}
		\label{v1}
        \caption{HW}
	\end{subfigure}
	\centering
	\begin{subfigure}{0.24\linewidth}
		\centering
		\includegraphics[width=1.1\linewidth]{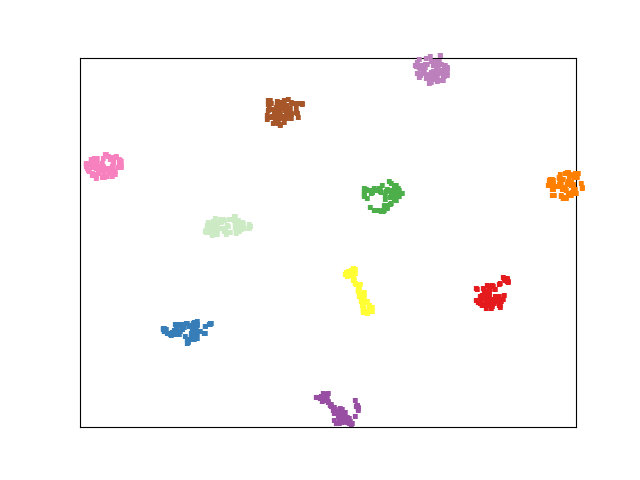}
		\label{v2}
           \caption{Multi-COIL-10}
	\end{subfigure}
	\centering
	\begin{subfigure}{0.24\linewidth}
		\centering
		\includegraphics[width=1.1\linewidth]{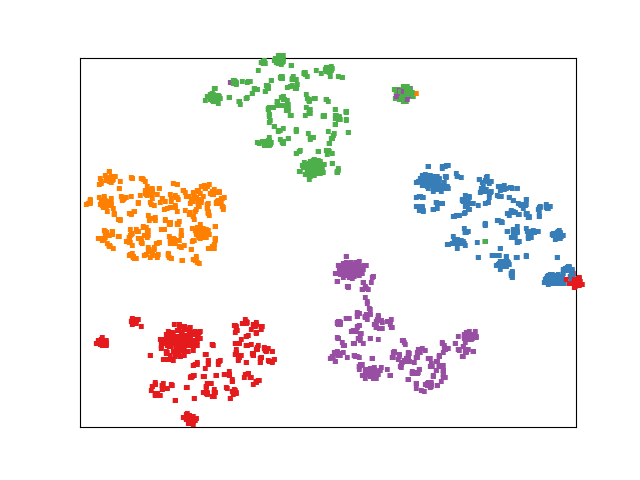}
		\label{v3}
        \caption{BDGP}
	\end{subfigure}
	\centering
	\begin{subfigure}{0.24\linewidth}
		\centering
		\includegraphics[width=1.1\linewidth]{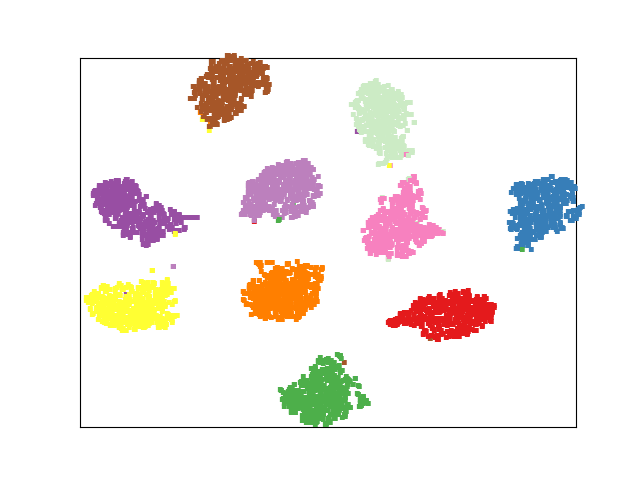}
		\label{v4}
           \caption{MNIST-USPS}
	\end{subfigure}
\caption{Visualization of the clustering results on four datasets.}
 \label{vis}
 \end{figure*}
\subsection{Results and Analysis}

\paragraph{Comparison with  Baselines.}
The comparison of  DMCAG and baseline methods  is shown in Table \ref{comparision}, where the best result is bolded in each row and the second-best  result is underlined. Due to the high complexity, the result of FMR is not obtained on Fasion-MV dataset after running 24 hours.  From Table \ref{comparision}, we can observe that  the proposed method achieves the best performance among all baseline models on six datasets, illustrating the validity of our method. Especially for Multi-COIL-10 and Fasion-MV, our method makes greater improvement than existing methods. The main reason is that we conduct clearer self-supervised learning with latent anchor graphs and achieve clustering consistency via contrastive learning on pseudo-labels.
Besides, many subspace clustering methods do not perform well on BDGP dataset due to the high dimension of one view's feature, however, our method outperforms due to the ability of extracting features from high-dimension data. 
\paragraph{Ablation Studies.}
To  verify the effectiveness of  the proposed method, we further conduct a set of ablation studies on the loss components from Eqs. (\ref{l_c}) and (\ref{l_q}). $L_r$ is the reconstruction loss of autoencoders. $L_s$ aims to mine global information and  supervise the learning process with global target distribution. $L_Q$  is optimized to achieve the consistency  among all views. In the Table \ref{ab},
(A) is optimized without self-supervised learning and contrastive learning, (B) is optimized without contrastive learning, and (C) is optimized without self-supervised process. 
As shown in Table \ref{ab}, we can find that  (B) and (C) perform better than (A). (D) (our proposed model with all loss components) performs the best, illustrating that  each loss component is important for the final clustering result. 

\paragraph{Parameter Sensitivity Analysis.}
As   main   hyperparameters of the  proposed method are  anchor number and sparsity coefficient, we test the general  clustering performance with different settings to show the stability of DMCAG.
From Fig. \ref{parameter} we can observe that our method in a certain range of parameters is insensitive to the clustering  results.  In addition, we can find that too many anchors will reduce clustering performance as the anchors'  information  may not be  representative.  A too large or too small $\gamma$  also reduces the final clustering performence, of which the reason is that the  inappropriate sparsity constraint could lead to extra errors.
\paragraph{Visualization of Clustering Results.}
We visualize the clustering results on four datasets  via t-SNE \cite{van2008visualizing}, which reduces the dimension of the extracted feature vectors to 2D. As shown in Fig. \ref{vis}, where different colors denote the labels of different nodes,  we can find that the final  clustering structures are clearly visible, especially for Multi-COIL-10 and MNIST-USPS, which further  demonstrates  the effectiveness of our DMCAG method.    

\section{Conclusion}

In this paper, we proposed a  novel  DMCAG framework    for   deep multi-view subspace clustering. 
By introducing the latent anchor graph and  spectral self-supervised learning, we can effectively perform spectral clustering to obtain more robust  global target distribution    and   significantly improve the latent features  structure for clustering.  In addition, we adopt the contrastive learning on   the  soft assignment  of  each view to  achieve the  consistency  among all views. Extensive experiments on six multi-view data sets  and ablation studies have demonstrated  the  superiority of the proposed method.  Moreover,   it is an interesting future work to  further improve the efficiency of constructing the 
latent anchor subspace  to deal with  large-scale data, and find more robust target distribution.

\section*{Acknowledgment}
This work was supported in part by Sichuan Science and Technology Program (Nos. 2022YFS0047 and 2022YFS0055), Medico-Engineering Cooperation Funds from the University of Electronic Science and Technology of China (No. ZYGX2021YGLH022), National Science Foundation (MRI 2215789), and Lehigh’s grants (Nos. S00010293 and 001250).
\bibliographystyle{named}
\bibliography{ijcai23}

\end{document}